\newcommand\forarxiv{}
\newcommand{\@BIBLABEL}{\@emptybiblabel}
\newcommand{\@emptybiblabel}[1]{}
\def\tl{\tilde}
 \def\0{{\bf 0}}
\def\qed{\hfill\hbox{${\vcenter{\vbox{
    \hrule height 0.4pt\hbox{\vrule width 0.4pt height 6pt
    \kern5pt\vrule width 0.4pt}\hrule height 0.4pt}}}$}}
\newcommand{\bprf}{\begin{myproof}}
\newcommand{\eprf}{\end{myproof}}
\newcommand{\bp}{\begin{psfrags}}
\newcommand{\ep}{\end{psfrags}}
\newcommand{\bl}{\begin{lemma}}
\newcommand{\el}{\end{lemma}}
\newcommand{\bt}{\begin{theorem}}
\newcommand{\et}{\end{theorem}}
\newcommand{\bc}{\begin{center}}
\newcommand{\ec}{\end{center}}
\newcommand{\bi}{\begin{itemize}}
\newcommand{\ei}{\end{itemize}}
\newcommand{\ben}{\begin{enumerate}}
\newcommand{\een}{\end{enumerate}}
\newcommand{\bd}{\begin{definition}}
\newcommand{\ed}{\end{definition}}
\def\beq{\begin{equation}}
\def\eeq{\end{equation}\noindent}
\def\beqn{\begin{eqnarray}}
\def\eeqn{\end{eqnarray} \noindent}
\def\beqnn{  \begin{eqnarray*}}
\def\eeqnn{\end{eqnarray*}  \noindent}
\def\bcase{  \begin{numcases}}
\def\ecase{\end{numcases}   \noindent}
\def\bsbcase{  \begin{subnumcases}}
\def\esbcase{\end{subnumcases}   \noindent}
\newtheorem{theorem}{Theorem}
\newtheorem{lemma}[theorem]{Lemma}
\newtheorem{definition}{Definition}
\newenvironment{myproof}{\noindent{\bf Proof:} \hspace*{1em}}{
    \hspace*{\fill} $\Box$ }
\newenvironment{proof_of}[1]{\noindent {\bf Proof of #1: }}{\hspace*{\fill} $\Box$ }
\newcommand{\matplottc}[1]{               
        \unitlength .45truein
        \begin{center}
        \includegraphics{#1.ps}
        \end{picture}
        \end{center}
}
\def\psfancypar#1#2{\begingroup\def\par{\endgraf\endgroup\lineskiplimit=0pt}
               \setbox2=\hbox{\large\sc #2}
               \newdimen\tmpht \tmpht \ht2 \advance\tmpht by \baselineskip
               \font\hhuge=Times-Bold at \tmpht
               \setbox1=\hbox{{\hhuge #1}}
               \count7=\tmpht \count8=\ht1
               \divide\count8 by 1000 \divide\count7 by \count8
               \tmpht=.001\tmpht\multiply\tmpht by \count7
               \font\hhuge=Times-Bold at \tmpht
               \setbox1=\hbox{{\hhuge #1}}
               \noindent
                \hangindent1.05\wd1
               \hangafter=-2 {\hskip-\hangindent
               \lower1\ht1\hbox{\raise1.0\ht2\copy1}%
                \kern-0\wd1}\copy2\lineskiplimit=-1000pt}
\def\Kout{\setbox1=\hbox{\Huge\bf K}\hbox to
1.05\wd1{\hspace{.05\wd1}
\def\Sout{\setbox1=\hbox{\Huge\bf S}\hbox to 1.05\wd1{\hspace{.05\wd1}

\newcommand\Tstrut{\rule{0pt}{2.4ex}}       


\newcommand\itrs{ITRS}
\newcommand\iet{IET}
\newcommand\corr{\mathit{corr}}

\ifthenelse{\isundefined{\forarxiv}}{
  \newcommand\citet{\newcite}
  \newcommand\plotwidth{0.49\textwidth}


}{
  \usepackage[round]{natbib}
  \renewcommand\cite{\citep}
  \newcommand\plotwidth{0.6\textwidth}

}


\ifthenelse{\isundefined{\forarxiv}}{
  \title{Knowledge Completion for Generics using Guided Tensor Factorization}
  \author{
    Hanie Sedghi\thanks{\ \ This work was done while the author was affiliated with the Allen Institute for Artificial Intelligence.}\\
    Google Brain\\
    Mountain View, CA, U.S.A.\\
    {\tt \small hsedghi@google.com}
    \And
    Ashish Sabharwal\\
    Allen Institute for Artificial Intelligence (AI2)\\
    Seattle, WA, U.S.A.\\
    {\tt \small AshishS@allenai.org}
  }
}{
  \title{Knowledge Completion for Generics\\ using Guided Tensor Factorization}
  \author{
    Hanie Sedghi\thanks{This work was done while the author was affiliated with the Allen Institute for Artificial Intelligence.}\\
    Google Brain\\
    Mountain View, CA, U.S.A.\\
    {\tt \small hsedghi@google.com}
    \and
    Ashish Sabharwal\\
    Allen Institute for Artificial Intelligence (AI2)\\
    Seattle, WA, U.S.A.\\
    {\tt \small AshishS@allenai.org}
  }
}

\date{}

\ifthenelse{\isundefined{\forarxiv}}{
}{
  \pagestyle{plain}
}

\begin{document}

\maketitle


\begin{abstract}

Given a knowledge base or KB containing (noisy) facts about common nouns or generics, such as ``all trees produce oxygen" or ``some animals live in forests", we consider the problem of inferring additional such facts at a precision similar to that of the starting KB. Such KBs capture general knowledge about the world, and are crucial for various applications such as question answering. 
Different from commonly studied named entity KBs such as Freebase, generics KBs involve quantification,
have more complex underlying regularities, tend to be more incomplete, 
and violate the commonly used locally closed world assumption (LCWA).
We show that existing KB completion methods struggle with this new task, and present the first approach that is successful.
Our results demonstrate that external information, such as relation schemas and entity taxonomies, if used appropriately, can be a surprisingly powerful tool in this setting. First, our simple yet effective knowledge guided tensor factorization approach achieves state-of-the-art results on two generics KBs (80\% precise) for science, doubling their size at 74\%-86\% precision. Second, our novel taxonomy guided, submodular, active learning method for collecting annotations about rare entities (e.g., oriole, a bird) is 6x more effective at inferring further new facts about them than multiple active learning baselines.

\end{abstract}

\section{Introduction}

We consider the problem of completing a partial knowledge base (KB) containing facts about generics or common nouns, represented as a third-order tensor of \emph{(source, relation, target)} triples, such as \emph{(butterfly, pollinate, flower)} and \emph{(thermometer, measure, temperature)}. Such facts capture common knowledge that humans have about the world. They are arguably essential for intelligent agents with human-like conversational abilities as well as for specific applications such as question answering. We demonstrate that state-of-the-art KB completion methods perform poorly when faced with generics, while our strategies for incorporating external knowledge as well as obtaining additional annotations for rare entities provide the first successful solution to this challenging new task.

Since generics represent classes of similar individuals, the truth value $y_i$ of a generics triple $x_i = (s,r,t)$ depends on the quantification semantics one associates with $s$ and $t$. Indeed, the semantics of generics statements can be ambiguous, even self-contradictory, due to cultural norms. As \citet{Leslie2008GenericsCA} points out, `ducks lay eggs' is generally considered true while `ducks are female', which is true for a broader set of ducks than the former statement, is generally considered false.

To avoid deep philosophical issues, we fix a particular mathematical semantics that is especially relevant for noisy facts derived automatically from text: associate $s$ with a categorical quantification from $\{all, some, none\}$ and associate $t$ (implicitly) with $some$. For instance, ``all butterflies pollinate (some) flower'' and ``some animals live in (some) forest''. When presenting such triples to humans, they are phrased as: \emph{is it true that all butterflies pollinate some flower?} As a notational shortcut, we treat the quantification of $s$ as the categorical label $y_i$ for the triple $x_i$. E.g., \emph{(butterfly, pollinate, flower)} is labeled $all$ while \emph{(animal, live in, forest)} is labeled $some$. Given a noisy KB of such labeled triples, the task is to infer more triples. 

Tensor factorization and graph based methods have both been found to be very effective for expanding knowledge bases, but have focused on \emph{named entity} KBs such as Freebase~\cite{bollacker2008freebase}
involving relations with clear semantics such as \emph{liveIn} and \emph{isACityIn}, and disambiguated entities such as \emph{Barack Obama} or \emph{Hawaii}. On the other hand, completing KBs that involve facts about generics surfaces new challenges, as evidenced by our empirical results when using existing methods.

It has been observed that Horn clauses often reliably connect predicates in the named-entity setting. For instance, for any person \emph{x}, city \emph{y}, and country \emph{z}, \emph{(x, liveIn, y)} \& \emph{(y, isACityIn, z)} $\Rightarrow$ \emph{(x, liveIn, z)}. With generics, however, clear patterns or reliable first-order logic rules are rare, in part due to each generic representing a collection of individuals that often have similarities with respect to some relations and differences with respect to others. For instance, \emph{(x, liveIn, mountain)} is true for many \emph{cats} and \emph{caribou}, but there is little tangible similarity between the two animals and it is unclear what, if anything, can be carried over from one to the other. On the other hand, if we take two animals that share a `parent' in some taxonomy (e.g., \emph{raindeer} and \emph{deer}), then the likelihood of knowledge transfer increases.


We propose to make use of additional rich background knowledge complementing the information present in the KB itself, such as a \emph{taxonomic hierarchy} of entities (available from sources such as WordNet~\cite{miller1995wordnet}) and the corresponding \emph{entity types} and \emph{relation schema}.  Our key insight is that, if used appropriately, \emph{taxonomic and schema information can be surprisingly effective in making tensor factorization methods vastly more effective for generics} for deriving high precision facts.

Intuitively, for generics, many properties of interest are themselves generic (e.g., living in forests, as opposed to living in a specific forest) and tend to be shared by siblings in a taxonomy (e.g., finch, oriole, and hummingbird). In contrast, siblings of named entities (e.g., various people) often differ substantially in the properties we typically care about and model (e.g., who they are married to, where they live, etc.). Methods that use type information are thus more promising for generics than for classical NLP tasks involving named entities. We propose three ways of using this information and empirically demonstrate the effectiveness of each on two variants of a KB of elementary level science facts~\cite{dalvi2017domaintargeted}.\footnote{We are unaware of other large generics KBs. Our method does not employ rules or choices specific to this dataset and is expected to generalize to other generics KBs, as and when they become available.}

First, we observe that simply imposing \textbf{schema consistency} (Section~\ref{sec:itrs}) on derived facts can significantly boost state-of-the-art methods such as Holographic Embeddings (HolE) \cite{Nickel2016HolographicEO} from nearly no new facts at 80\% precision to over 10,000 new facts, starting with a generics KB of a similar size. Other embedding methods, such as TransE~\cite{bordes2013translating}, RESCAL~\cite{nickel2011three}, and SICTF~\cite{nimishakavi2016relation} (which uses schema information as well), also produced no new facts at 80\% precision.
Graph-based completion methods did not scale to our densely connected tensors.\footnote{\label{footnote:graph-based}On the smaller Animals tensor (to be described later), PRA~\cite{lao2011random} generated very few high-precision facts after 30 hours. SFE~\cite{Gardner2015EfficientAE} was unable to finish training a classifier for any relation after a day, in part due to the high connectivity of generics like \emph{animal}. On the other hand, HolE is trained in a couple of minutes even on the larger Science tensor, and can be made even faster using the method of \citet{Hayashi2017OnTE}.} 

Second, one can further boost performance by transferring knowledge up and down the \textbf{taxonomic hierarchy}, using the quantification semantics of generics (Section~\ref{sec:iet}). We show that expanding the starting tensor this way before applying tensor factorization is complementary and results in a statistically significantly higher precision (86.4\% as opposed to 82\%) over new facts at the same yield.

Finally, we propose a novel \textbf{limited-budget taxonomy guided active learning} method to address the challenge of significant incompleteness in generics KBs, by
quantifying uncertainty via siblings (Section~\ref{sec:AL}).
\citet{dalvi2017domaintargeted} have observed that reliable facts about generics are much harder to derive using information extraction methods than facts about named entities. 
This makes generics KBs vastly incomplete, with no or very little information about certain entities such as caribou or oriole.

Our active learning approach addresses the following question: \emph{Given a new entity\footnote{Unless otherwise stated, we will henceforth use \emph{entity} to refer to a singular common noun that represents a class or group of individuals, such as \emph{animal, hummingbird, forest,} etc.} $\tl{e}$ and a budget $B$, what is a good set $Q$ of $B$ queries about $\tl{e}$ to annotate (via humans) such that expanding the original tensor with $Q$ helps a KB completion method infer many more high precision facts about $\tl{e}$?}

%
We propose to define a correlation based measure of the uncertainty of each unannotated triple (i.e., a potential query) involving $\tl{e}$ based on how frequently the corresponding triple is true for $\tl{e}$'s siblings in the taxonomic hierarchy (Section~\ref{sec:AL1}).
%
We then develop a submodular objective function, and a corresponding greedy $(1 - 1/e)$-approximation, to search for a small subset of triples to annotate that optimally balances diversity with coverage 
(Section~\ref{sec:AL2}).
We demonstrate that annotating this balanced subset makes tensor factorization derive substantially more new and interesting facts
compared to several active learning baselines. For example, with a budget to annotate 100 queries about a new entity oriole, random queries lead to no new true facts at all (via annotation followed by tensor factorization), imposing schema consistency results in 83 new facts, and our proposed method ends up with 483 new facts. This demonstrates that well-designed intelligent queries can be substantially more effective in gathering facts about the new entity.

In summary, this work tackles for the first time the challenging task of knowledge completion for generics, by imposing consistency with external knowledge. Our efficient sibling-guided active learning approach addresses the paucity of facts about certain entities, successfully inferring a substantial number of new facts about them.

\subsection{Related Work}

KB completion approaches fall into two main classes: graph-based methods and those employing low-dimensional embeddings via matrix or tensor factorization. The former use graph traversal techniques to complete the KB, by learning which types of paths or transitions are indicative of which relation between the start and end points~\cite{lao2011random,Gardner2015EfficientAE}. 
This class of solutions, unfortunately, does not scale well to our setting (cf.~Footnote~\ref{footnote:graph-based}). This appears due, at least in part, to different connectivity characteristics of generics tensors compared to named entity ones such as FB15k~\cite{bordes2013translating}.
Advances in the latter set of methods have led to several embedding-based methods that are highly successful at KB completion for named entities~\cite{nickel2011three,Riedel2013RelationEW,dong2014knowledge,Trouillon2016ComplexEF,Nickel2016ARO}. We compare against many of these, including variants of HolE, TransE, and RESCAL. 

Recent work on incorporating entity type and relation schema in tensor factorization~\cite{krompass2014large,krompass2015type,xierepresentation} has focused on factual databases about named entities, which, as discussed earlier, have very different characteristics than generics tensors.
\citet{nimishakavi2016relation} use entity type information as a matrix in the context of non-negative RESCAL for schema induction on medical research documents. As a byproduct, they complete missing entries in the tensor in a schema-compatible manner. We show that our proposal performs better on generics tensors than their method, SICTF. SICTF, in turn, is meant to be an improvement over the TRESCAL system of \citet{Chang2014TypedTD}, which also incorporates types in RESCAL in a similar manner. Recently, \citet{Schutze2017NoiseMF} proposed a neural model for fine-grained entity typing and for robustly using type information to improve relation extraction, but this is targeted for Freebase style named entities.

For schema-aware discriminative training of embeddings, \citet{xierepresentation} use a flexible ratio of negative samples from both schema consistent and schema inconsistent triples. Their combined ideas, however, do not improve upon vanilla HolE (one of our baselines) on the standard FB15k~\cite{bordes2013translating} dataset. They also consider imposing hierarchical types for Freebase, as entities may have different meanings when they have different types---an issue that typically does not apply to generics KBs.~\citet{komninosfeature} use type information along with additional textual evidence for knowledge base completion on the FB15k237 dataset. They learn embeddings for types, along with entities and relations, and show that this way of incorporating type information has a (small) contribution towards improving performance.
Incorporating given first order logic rules has been explored for the simpler case of matrix factorization~\cite{Rocktschel2015InjectingLB,Demeester2016LiftedRI}. Existing first order logic rule extraction methods, however, struggle to find meaningful rules for generics, making this approach not yet viable in our setting.

\citet{xie2016representation} consider inferring facts about a new entity $\tl{e}$ given a `description' of that entity. They use Convolutional Neural Networks (CNNs) to encode the description, deriving an embedding for $\tl{e}$. Such a description in our context would correspond to knowing some factual triples about $\tl{e}$, which is a restricted version of our active learning setting. 

\citet{krishnamurthy2013low} consider active learning for a particular kind of tensor decomposition, namely CP or Candecomp/Parafac decomposition into a low dimensional space. They start with an \emph{empty} tensor and look for the most informative slices and columns to fill \emph{completely} to achieve optimal sample complexity. Their framework builds upon the \emph{incoherence} assumption on the column space, which does not apply to generics KB.

\citet{Hegde2015AnEA} use an entity-centric information extraction (IE) approach for obtaining new facts about entities of interest. \citet{Narasimhan2016ImprovingIE} use a reinforcement learning approach to issue search queries to acquire additional evidence for a candidate fact. Both of these works, and others along similar lines, are advanced IE techniques that operate via a search for new documents and extraction of facts from them. This is different from the KB completion task, where the only source of information is the starting KB and possibly some details about the involved entities and relations.

\section{Tensors of Generics}
\label{sec:tensor_description}

We consider 
knowledge expressed in terms of \textit{(source, relation, target)} triples, abbreviated as $(s,r,t)$. Such a triple may refer to \textit{(subject, predicate, object)} style facts commonly used in information extraction. Each source and target is an \emph{entity} that is a generic noun, e.g., animals, habitats, or food items. Examples of relations include \emph{foundIn}, \emph{eat}, etc. As mentioned earlier, with each generics triple $(s,r,t)$, we associate a categorical truth value $q \in \{all, some, none\}$, defining the quantification semantics ``$q$ $s$ $r$ (some) $t$''. For instance, ``some animals live in (some) forest'' and ``all dogs eat (some) bone''.
\emph{Given a set $K$ of such triples with annotated truth values, the task is to predict additional triples $K'$ that are also likely to be true.}

In addition to a list of triples, we assume access to \textbf{background information} in the form of entity types and the corresponding \emph{relation schema}, as well as a \emph{taxonomic hierarchy}.\footnote{We do not assume that the schema or taxonomy is perfect, and instead rely on these only for heuristic guidance.} Let $E_T$ denote the set of possible entity types. For each relation $r$, the relation schema imposes a type constraint on the entities that may appear as its source or target. Specifically, using $[\ell]$ to denote the set $\{1, 2, \ldots, \ell\}$, the \emph{schema} for $r$ is a collection $\mathcal{S}_r = \lbrace (\mathcal{D}^{(i)}_r, \mathcal{R}^{(i)}_r) \subseteq E_T \times E_T \mid i \in [\ell]\rbrace$ of domain-range pairs with the following property:
the truth value of $(s,r,t)$ is $none$
whenever for every $i \in [\ell]$ it is the case that $s \notin \mathcal{D}^{(i)}_r$ or $t \notin \mathcal{R}^{(i)}_r$.
For example, the relation \textit{foundIn} may be associated with the schema $\mathcal{S}_{\mathit{foundIn}} = \{$\textit{(animal, location), (insect, animal), (plant, habitat), $\dotsc$}$\}$. Similarly, the taxonomic hierarchy defines a partial order $\mathcal{H}$ over all entities that captures the ``isa'' relation, with direct links such as \textit{isa(dog, mammal)} or \textit{isa(gerbil, rodent)}. We use this information to extract ``siblings'' of a given entity, i.e., entities that share a common parent (this may be easily generalized to any common ancestor).

\section{Guided Knowledge Completion}
\label{sec:IKBC}

We begin with an overview of tensor factorization for KB completion for generics.
%
Let $(s,r,t)$ be a generics triple associated with a categorical quantification label $q \in \lbrace all, some, none \rbrace$. For example, \textit{((cat, havePart, whiskers), all), ((cat, liveIn, homes), some),} and \textit{((cat, eat, bear), none)}. 
Predicting such labels is thus a multi-class classification problem. Given a set $K$ of labeled triples, the goal of tensor factorization is to learn a low-dimensional embedding $h$ for each entity and relation such that some function $f$ of $h$ best captures the given labels. Given a new triple, we can then use $f$ and the learned $h$ to predict the probability of each label for it. $K$ often contains only ``positive'' triples, i.e., those with label \textit{all} or \textit{some}. A common step in discriminative training for $h$ is thus \emph{negative sampling}, i.e., generating additional triples that (are expected to) have label \textit{none}.

With $[m]$ denoting the set $\{1, 2, \ldots, m\}$ as before, let $K = \lbrace(x_i, y_i), i \in [m] \rbrace$ be a set of triples $x_i = (s_i, r_i, t_i)$ and corresponding labels $y_i \in \lbrace 1, 2, 3 \rbrace$ equivalent to categorical quantification label $q_i \in \lbrace all, some, none \rbrace$. We learn entity and relation embeddings $\Theta$ that minimize the multinomial logistic loss defined as:
\begin{align}
& \min_\Theta {\sum_{i=1}^m \sum_{k=1}^3 -\mathbbm{1} \lbrace y_i=k \rbrace \log \Pr(y_i = k \mid x_i, \Theta)} \nonumber \\
\label{eqn:multi-class}
& = \min_\Theta {\sum_{i=1}^m \sum_{k=1}^3 -\mathbbm{1} \lbrace y_i=k \rbrace \log \sigma(y_i\, f(h_r, h_s, h_t))}
\end{align}
where 
$h_r, h_s, h_t \in \mathbb{R}^d$ denote the learned embeddings (latent vectors) for $s, r, t,$ respectively, and $\sigma(\cdot)$ is the sigmoid function defined as $\sigma(z) = \frac{1}{1+\exp(-z)}$.

If the \textit{all} categorical label for generics is unavailable,\footnote{This happens to be the case for current generics KBs, but is expected to change with increasing interest in the research community. A step in this direction is a recent version of the Aristo Tuple KB, http://allenai.org/data/aristo-tuple-kb, which includes \emph{most} as a quantification label, in addition to \emph{some}.} we can simplify the label space to $\lbrace some, none \rbrace$, modeled as $y_i \in \lbrace \pm1 \rbrace$, and reduce the model to binary classification:
\begin{align}
  \label{eqn:two-class}
  \min_\Theta {\sum_{i = 1}^m} \log \left[1+\exp \left[-y_i\, f(h_r, h_s, h_t) \right]\right]
\end{align}

We remark that while this generics task with only two labels appears superficially similar to the standard KB completion task for named entities, the underlying challenges and solutions are different. For instance, the approach of using taxonomic information (as opposed to just entity types) as a guide is uniquely suited to generics KBs. The reason being that a generic entity refers to a \emph{set} of individuals, with a natural subset/superset relation forming a taxonomy, whereas in standard KBs an entity refers to one specific individual. This prevents taxonomy based rules from providing useful information for standard KBs, while our results demonstrate their high value when reasoning with generics. Differences like this lead to differences in what is successful in each setting and what is not. 

While all our proposed schemes are embedding oblivious, for concreteness, we describe and evaluate them for the \textbf{Holographic Embedding} or HolE~\cite{Nickel2016HolographicEO} 
which models the label probability as:
\begin{align}
\label{eqn:HolE}
f(h_r, h_s, h_t) = h_r^\top(h_s \circ h_t)
\end{align}
where $\circ : \mathbb{R}^d \times \mathbb{R}^d \rightarrow \mathbb{R}^d$ denotes circular correlation defined as:
\begin{align}
\label{eqn:circ}
[a \circ b]_k=\sum_{i=0}^{d-1} a_i b_{(i+k)\!\!\mod d}
\end{align}
Intuitively, the $k$-th dimension of circular correlation captures how related $a$ is to $b$ when the dimensions of the latter are shifted (circularly, via the $\mathrm{mod}$ operation) by $k$. In particular $[a \circ b]_0$ is simply the dot product of $a$ and $b$.
As can be deduced from Eqns.~\eqref{eqn:HolE}-\eqref{eqn:circ}, this model resembles circular convolution, but can capture, to some extent, relations that are asymmetric among the source and target entities. This is because $[a \circ b]$ is not the same as $[b \circ a]$ but is rather ``flipped'' ($[a \circ b]_k = [b \circ a]_{d-k}$). If we consider the $d \times d$ matrix $M_{ab}$ of element-wise relationships between $a$ and $b$, the HolE embedding of a relation $r$ between $a$ and $b$ defines a weighted sum of circular anti-diagonals of $M_{ab}$.

Circular correlation can be computed using the fast Fourier transform (FFT), making HolE quite efficient in practice.
\citet{Hayashi2017OnTE} recently showed that HolE and complex embeddings~\cite{Trouillon2016ComplexEF}, which is another state-of-the-art method for KB completion, are equivalent and differ only in terms of constraints on initial values. Further, they proposed a linear time computation for HolE by staying fully within the frequency domain of FFT.

\subsection{Incorporating Types and Relation Schema (\itrs)}
\label{sec:itrs}

As described earlier, relation schema $\mathcal{S}_r$ imposes a restriction on sources and targets that may occur with a relation $r$.
We can incorporate this knowledge both at training and at test times.
Doing this at test time simply translates to relabeling schema inconsistent predicted triples as \textit{none}.
Incorporating this knowledge at training time can be done as a constraint on the random negative samples that the method generates to complement the given, typically positive, triples for training.

In general, the ratio of random negative samples from the entire tensor $\mathcal{T}$ and random negative samples from the schema consistent portion $\mathcal{T}'$ of $\mathcal{T}$ is a parameter that should be tuned such that the resulting negative samples mimic the true distribution of labels. It is worth noting that whether the locally closed world assumption (LCWA) holds or not plays an important role in determining this ratio. However, the idea of mixing the two kinds of negative samples has been used in the literature without considering the nature of the dataset, resulting in some seemingly contradicting empirical results on the optimal ratio~\cite{li2016commonsense,xierepresentation,Shi2017ProjEEP,Xie2017AnIK}. As discussed later, we found sampling from $\mathcal{T}$ to work best on our datasets. 


\subsection{Incorporating Entity Taxonomy (\iet)}
\label{sec:iet}

It is challenging to come up with complex Horn or first order logic rules for generics, as each entity represents a class of individuals that may not all behave identically. However, we can derive simple yet highly effective rules based on categorical quantification labels, leveraging the fact that entities come from different levels in a taxonomy hierarchy.

Let $p$ be the parent entity for entity set $\lbrace c_i \rbrace$. Note that $c_i$ itself is a generic, that is, a class of individuals rather than a single individual. This allows one to make meaningful existential statements such as: if a property holds for all or most members of even one class $c_i$, then it holds for some (reasonable number of) members of its parent class $p$. We use the following rules:\footnote{The last rule may not be appropriate for KBs where $\mathit{some}$ may refer to the extreme case of a single individual. This is not the case for the KBs we use for our evaluation.}
\begin{align*}
((p,r_j,t_j),\mathit{all})
  & \Rightarrow \forall i~~ ((c_i,r_j,t_j),\mathit{all}) \\
\forall i~~ ((c_i,r_j,t_j),\mathit{all})
  & \Rightarrow ((p,e_j,t_j),\mathit{all}) \\
\exists i ~~((c_i,r_j,t_j),\mathit{all})
  & \Rightarrow ((p,e_j,t_j),\mathit{some}) \\
\exists i ~~ ((c_i,r_j,t_j),\mathit{some})
  & \Rightarrow ((p,e_j,t_j),\mathit{some}) 
\end{align*}

We apply these rules to address sparsity of generics tensors, making tensor factorization more robust. Specifically, given initial triples $K$, we use applicable rules to derive additional triples $K'$, perform tensor factorization on $K \cup K'$, and then revisit the triples in $K'$ using their predicted label probabilities. Note that this approach allows us to be robust to taxonomic errors: instead of assuming each triple in $K'$ is true, we use this only as a prior and let tensor factorization determine the final prediction based on global patterns it finds.


\section{Active Learning for New or Rare Entities}
\label{sec:AL}



To address the incomplete nature of generics KBs, we consider \emph{rare} entities for which we have very few facts, or \emph{new} entities which are present in the taxonomy but for which we have no facts in the KB.
The goal is to use tensor factorization to generate high quality facts about such entities.

For instance, consider the task of inferring facts about \textit{oriole}, where \emph{all we know is that it is a bird}. We assume a restricted budget on the number of facts we can query (for human annotation) about \textit{oriole}, using which we would like to predict many more high-quality facts about it.

Given a fixed query budget $B$, what is the optimal set of queries we should generate for human annotation about a new or rare entity $\tl{e}$ for this task? We view this as an \emph{active learning} problem and propose a two-step algorithm. First, we use taxonomy guided uncertainty sampling to propose a list $L$ to potentially query. Next, we describe a submodular objective function and a corresponding linear time algorithm to choose an optimal subset $\widehat{L} \subseteq L$ satisfying $|\widehat{L}| = B$. We then use $\widehat{L}$ for human annotation, append the result to the original KB, and perform tensor factorization to predict additional new facts about $\tl{e}$. For notational simplicity and without loss of generality, throughout this section, we consider the case where $\tl{e}$ appears as the source entity in the triple; the ideas apply equally when $\tl{e}$ appears as the target entity in the triple. 


\subsection{Knowledge Guided Uncertainty Quantification}
\label{sec:AL1}


We now discuss the active learning and specifically uncertainty sampling method we use to propose a list of triples to query. Uncertainty sampling considers the uncertainty for each possible triple $(\tl{e}, r_i, e_i)$, defined as how far away from 0.5 is the conditional probability of this fact given the facts we already know (i.e.,
KB)~\cite{Settles2012ActiveL}. The question is how to model this conditional probability. A simple baseline is to consider \textbf{Random} queries, i.e.,  $r, e$ are  selected randomly from the list of relations and entities in the tensor, respectively.

To infer information about $\tl{e}$, we propose the following approximation for the conditional probability of a new fact about $\tl{e}$ given the KB. Let  $\tl{E}_{\tl{e}} = \lbrace e \mid \corr(\tl{e}, e) > 0 \rbrace$ be the set of entities that are correlated with $\tl{e}$, $\Omega = \lbrace ((e_i, r_i, e'_i),y_i) \mid e_i \in \tl{E}_{\tl{e}} \rbrace$ be the set of known facts about such entities, and $y_i$ be the label for the triple $(e_i, r_i, e'_i)$. We have:
\begin{align}
\label{eqn:cond_pr_approx}
\Pr(f(h_{r_i},h_{\tl{e}},h_{e'_i})) \simeq \frac{1}{|\Omega|} \sum_{e_i \in \tl{E}_{\tl{e}}} \corr(\tl{e}, e_i) \,\, y_i
\end{align}
However, in practice, we cannot measure $\corr(\tl{e}, e_i)$ for every entry in the KB as we do not have complete information about $\tl{e}$. One simple idea is to consider that every entity is correlated with $\tl{e}$: $\corr(\tl{e}, e_i) = 1 ~~\forall e_i \in E$. We will refer to this as \textbf{Schema Consistent} query proposal as this relates to summing over all possible (hence schema consistent) facts.

Since we have access to taxonomy information, we can do a more precise, \textbf{Sibling Guided}, approximation.\footnote{One may also define $\corr$ based on entity similarity in a distributional space. One challenge here is that such similarity generally doesn't preserve types. E.g., \emph{dog} may co-occur more often with and thus be ``closer" to \emph{bone} or \emph{barking} in a distributional space, than to siblings such as \emph{cat} or other pet animals, which are more helpful in our setting.} We propose the following approximation for $\corr(\tl{e}, e_i)$ for $e_i \in E$:
\begin{align}
\label{eqn:corr_approx}
\corr(\tl{e}, e_i)=\left\lbrace \begin{array}{ll}
1 & \text{if\ } e_i \in \text{sibling}(\tl{e}) \\
0 & \text{otherwise} \end{array} \right.
\end{align}
Eqns.~\eqref{eqn:cond_pr_approx} and~\eqref{eqn:corr_approx} can be used to infer uncertain triples: if every sibling of $\tl{e}$ has relationship $r$ with an entity $e'$, we can infer for ``free'' that this is the case for $\tl{e}$ as well. On the other hand, when 
siblings disagree in this respect,
there is more uncertainty about $(\tl{e},r,e') $ (according to~\eqref{eqn:cond_pr_approx} and~\eqref{eqn:corr_approx}), making this triple a good candidate to query.
In our example of \textit{oriole}, the siblings are the \textit{birds} that exist in the tensor, e.g., \textit{hummingbird, finch, woodpecker, etc}. All of them \textit{(eat, insect)} and hence we infer this for oriole. But there is no agreement on \textit{(appearIn, farm)} and hence this is added to the query list.

Algorithm~\ref{algo:AL} formalizes this process. Setting some upper ($\tau_U$) and lower ($\tau_L$) bounds on the conditional probability (Eqn.~\eqref{eqn:cond_pr_approx}) which quantifies the uncertainty, we reach a set $L = \lbrace (\tl{e}, r_i, e_i), i \in I \rbrace$ of triples to query. Using another high threshold $\kappa_M > \tau_U$, we also infer the set $M = \lbrace (\tl{e}, r_j, e_j), j \in J \rbrace$ of triples that a large majority of siblings agree upon, and hence $\tl{e}$ is expected to agree with as well. Triples whose conditional probability estimate is between $\kappa_M$ and $\tau_U$ are considered neither certain enough to include in $M$ nor uncertain enough to justify adding to $L$ for human annotation in hopes of learning from it. Similarly, triples with a conditional probability estimate lower than $\tau_L$ are discarded. The output of Algorithm~\ref{algo:AL} is the list $L$ to query and the list $M$ to add directly to the knowledge base.

\begin{algorithm}[t]
\caption{Active Learning for Query Proposal}
\label{algo:AL}
\begin{algorithmic}[1]
\renewcommand{\algorithmicrequire}{\textbf{input}}
\renewcommand{\algorithmicensure}{\textbf{output}}

\newcommand{\LineIf}[2]{ \STATE \algorithmicif\ {#1}\ \algorithmicthen\ {#2}}

\REQUIRE  new entity $\tl{e}$, KB, taxonomy, lower bound $\kappa_M$ on agreement, lower bound $\tau_L$ on uncertainty, upper bound $\tau_U$ on uncertainty
\STATE extract list $S_{\tl{e}}$ of sibling$(\tl{e})$ using taxonomy
\STATE for each $e_i \in S_{\tl{e}}$, add all facts about $e_i$ to $\Omega$
\FOR{$(\tl{e}, r_i, e'_i) \in \Omega$}
\STATE use~\eqref{eqn:cond_pr_approx}-\eqref{eqn:corr_approx} to estimate
  $\Pr(f(h_{r_i},h_{\tl{e}},h_{e'_i}))$
\LineIf{$p \geq \kappa_M$}{add $(\tl{e}, r_i, e'_i)$ to M}
\LineIf{$\tau_L \leq p \leq \tau_U$}{add $(\tl{e}, r_i, e'_i)$ to L}
\ENDFOR

\ENSURE L, M
\end{algorithmic}
\end{algorithm}
\subsection{Efficient Subset Selection}
\label{sec:AL2}

 Given the list $L$ as above (Algorithm~\ref{algo:AL}), which we can write in short as $L = \lbrace (r_i, e_i), i \in I \rbrace$, the problem is to find the ``best'' subset $\widehat{L}$. A baseline for such a selection is to choose the top k queries. We will refer to this as \textbf{TK} subset selection.
 
Viewing subset selection as a combinatorial problem, we devise an objective $\mathcal{F}$ that models several natural properties of this subset. We then prove that $\mathcal{F}$ is \textbf{submodular}, that is, the marginal gain in $\mathcal{F}(L)$ obtained by adding one more item to $L$ decreases as $L$ grows.\footnote{Formally, for $L'' \subseteq L' \subseteq L$ and for $l = (r_l, e_l) \in L \setminus L'$, we have
  $\mathcal{F}(L'' \cup l) - \mathcal{F}(L'') \geq  \mathcal{F}(L' \cup l) - \mathcal{F}(L')$.
}
Importantly, this implies that there is a simple known greedy algorithm that can efficiently compute a worst-case $(1- 1/e)$-approximation of the global optimum of $\mathcal{F}$~\cite{nemhauser1978analysis}. We refer to this as \textbf{SM} subset selection.

Since queried samples will eventually be fed into tensor factorization, we would like $\widehat{L}$ to \textit{cover} entities (for the other argument of the triple) and relations as much as possible. In addition, we would like $\widehat{L}$ to be \textit{diverse}, i.e., prioritize relations and entities that are more varied.\footnote{This agrees with the sampling method of~\citet{chen2014coherent} for factorizing coherent \textit{matrices} with missing values, which chooses samples with probability proportional to their local coherence.}  At the same time, we would also want to minimize redundancy, i.e., avoid choosing relations (entities) that are too similar. Let $\mathcal{F}(\widehat{L}, R_{\widehat{L}}, E_{\widehat{L}})$ denote our objective, where $R_{\widehat{L}}$, $E_{\widehat{L}}$ is the set of relations and entities in $\widehat{L}$, respectively. We decompose it as:
  \begin{align}
  \label{eqn:objective}
  \ifthenelse{\isundefined{\forarxiv}}{
 &\mathcal{F}(\widehat{L}, R_{\widehat{L}}, E_{\widehat{L}}) = w_{\mathcal{C}}\mathcal{C}(\widehat{L}, R_{\widehat{L}}, E_{\widehat{L}})  \\ \nonumber
 &~~~~ + w_{\mathcal{D}}\mathcal{D}(\widehat{L}, R_{\widehat{L}}, E_{\widehat{L}}) - w_{\mathcal{R}}\mathcal{R}(\widehat{L}, R_{\widehat{L}}, E_{\widehat{L}})
 }{
 \mathcal{F}(\widehat{L}, R_{\widehat{L}}, E_{\widehat{L}}) & = w_{\mathcal{C}}\mathcal{C}(\widehat{L}, R_{\widehat{L}}, E_{\widehat{L}})
 + w_{\mathcal{D}}\mathcal{D}(\widehat{L}, R_{\widehat{L}}, E_{\widehat{L}}) - w_{\mathcal{R}}\mathcal{R}(\widehat{L}, R_{\widehat{L}}, E_{\widehat{L}})
 }
 \end{align}
 where the terms in RHS correspond to \textit{coverage}, \textit{diversity}, and \textit{redundancy}, resp., and $w_{\mathcal{C}}, w_{\mathcal{D}},w_{\mathcal{R}}$ are the corresponding non-negative weights. Next, we propose functional forms for these terms. Note that any function that captures the described properties can be used instead, as long as the objective remains submodular. 
 
Let $R$ and $E$ denote the set of relations and entities in the KB, resp. The coverage simply captures the fraction of entity and relations that we have included in $\widehat{L}$:
 \begin{align*}
\mathcal{C}(\widehat{L}, R_{\widehat{L}}, E_{\widehat{L}}) = \frac{|R_{\widehat{L}}|}{|R|} +  \frac{|E_{\widehat{L}}|}{|E|}.  
\end{align*}
The diversity for $\widehat{L}$ is the sum of the diversity measure of the entities and relations included in the set:
\begin{align*}
\mathcal{D}(\widehat{L}, R_{\widehat{L}}, E_{\widehat{L}})  = \underset{(r, e) \in \widehat{L}}{\sum} \left[ V_r + V_e  \right],
\end{align*}
\begin{align*}
V_r &= \frac{|E_{S_r}|+|E_{T_r}|}{|E|},
 &
 V_{e} &= \frac{|R_{e}|+|E_{S_e}|}{|R| + |E|}.  
 \end{align*}
Here $V_r$ and $V_e$ represent the diversity measure of relation $r$ and entity $e$, respectively. We use $E_{S_r}, E_{T_r}$ to denote the set of sources and targets that appear for relation $r$ in the KB, $R_e$ as the set of relations in the KB that have $e$ as their target, and $E_{S_e}$ as the set of entities that appear as the first entity when $e$ is the second entity of the triple in the KB. The diversity measure for each relation $r$ is defined as the ratio of the number of entities that appear in the KB as its source or target, over the total number of entities. Similarly, for an entity $e$, its diversity is defined as the ratio of the number of relations involving $e$ plus the number of source entities that co-occur with $e$ in a relation, over the total number of relations and entities. Note that the diversity measure is an intrinsic characteristic of each entity and relationship, dictated by the KB and independent of the set $L$, and can thus be computed in advance.

 
 As described above, redundancy is a measure of similarity between relations(entities) in $\widehat{L}$. Tensor factorization yields an embedding for each relation(entity) given the facts they participated in. Therefore, the learned embeddings are one of the best options for capturing similarities. Let $h_e$ (and $h_r$) denote the learned embedding for entity $e$ (and relation $r$, resp.). We define
 \begin{align*}
\ifthenelse{\isundefined{\forarxiv}}{
  \mathcal{R}(\widehat{L}, R_{\widehat{L}}, E_{\widehat{L}})
    & = \underset{\!r_1,r_2 \in \widehat{L}}{\!\!\!\!\!\sum} \!\! \Vert h_{r_1}-h_{r_2} \Vert \\
    & \ \ \ \ \ + \underset{e_1,e_2 \in \widehat{L}}{\!\!\!\!\!\sum} \!\! \!\Vert h_{e_1}-h_{e_2} \Vert.
}{
  \mathcal{R}(\widehat{L}, R_{\widehat{L}}, E_{\widehat{L}})
    & = \underset{\!r_1,r_2 \in \widehat{L}}{\!\!\!\!\!\sum} \!\! \Vert h_{r_1}-h_{r_2} \vert
      + \underset{e_1,e_2 \in \widehat{L}}{\!\!\!\!\!\sum} \!\! \!\Vert h_{e_1}-h_{e_2} \Vert.
}
 \end{align*}
 
This completes the definition of all pieces of our objective function, $\mathcal{F}$, from Eqn.~\eqref{eqn:objective}. In Algorithm~\ref{algo:subset_selection}, we present our efficient greedy method to select a subset of $L$ that approximately optimizes $\mathcal{F}$.

\begin{algorithm}[t]
\caption{Query Subset Selection}
\label{algo:subset_selection}
\begin{algorithmic}[1]
\renewcommand{\algorithmicrequire}{\textbf{input}}
\renewcommand{\algorithmicensure}{\textbf{output}}
\REQUIRE KB, budget $B$, query list $L$ from Alg.~\ref{algo:AL}.
\STATE $\forall (r,e) \in L$, compute the diversity measure $V_r, V_e$
\STATE $\widehat{L} \leftarrow \emptyset$
\FOR{$j=1$ to $B$}
\ifthenelse{\isundefined{\forarxiv}}{
  \STATE $\forall l \in L \setminus \widehat{L}: \mathcal{G}(l) = \mathcal{F}(\widehat{L} \cup l) - \mathcal{F}(\widehat{L})$,\\ for $\mathcal{F}$ in~\eqref{eqn:objective}
}{
  \STATE $\forall l \in L \setminus \widehat{L}: \mathcal{G}(l) = \mathcal{F}(\widehat{L} \cup l) - \mathcal{F}(\widehat{L})$, for $\mathcal{F}$ in~\eqref{eqn:objective}
}
\STATE Select $l^* = \arg\max_{L \setminus \widehat{L}} \mathcal{G}(l)$
\STATE Add $l^*$ to $\widehat{L}$
\ENDFOR

\ENSURE $\widehat{L}$
\end{algorithmic}
\end{algorithm}

Despite being a greedy approach that simply adds the currently most valuable single query to $\widehat{L}$ and repeats, the submodular nature of $\mathcal{F}$, which we will prove shortly, guarantees that Algorithm~\ref{algo:subset_selection} provides an approximation that, even in the worse case, is no worse than a factor of $1 - 1/e$ from the (unknown) true optimum of $\mathcal{F}$. This is formalized in the following theorem.
Since addition preserves submodularity and the weights $w_{\mathcal{C}}, w_{\mathcal{D}}, w_{\mathcal{R}}$ are non-negative, we will show that each of the three terms in $\mathcal{F}$ is submodular.

 \begin{theorem}
 \label{thm:greedy}
Given a tensor KB, a budget $B$, and a candidate query list $L$, the quality $\mathcal{F}(\widehat{L}, R_{\widehat{L}}, E_{\widehat{L}})$ of the output $\widehat{L}$ of Algorithm~\ref{algo:subset_selection} is a $(1 - 1/e)$-approximation of the global optimum of $\mathcal{F}$.
 \end{theorem}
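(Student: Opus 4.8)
The plan is to derive the approximation guarantee from the classical result of \citet{nemhauser1978analysis}: the greedy algorithm returns a $(1-1/e)$-approximation whenever it maximizes a normalized, monotone, submodular set function subject to a cardinality constraint $|\widehat{L}| = B$. Algorithm~\ref{algo:subset_selection} is exactly this greedy procedure applied to $\mathcal{F}$, so the entire burden reduces to verifying that $\mathcal{F}$ has these structural properties. Since submodularity (like monotonicity) is preserved under non-negative linear combinations, and the weights $w_{\mathcal{C}}, w_{\mathcal{D}}, w_{\mathcal{R}}$ are non-negative, it suffices to show that each of $\mathcal{C}$, $\mathcal{D}$, and $-\mathcal{R}$ is submodular; these then combine into a submodular $\mathcal{F}$.

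First I would dispatch the coverage and diversity terms. The coverage $\mathcal{C}(\widehat{L}) = |R_{\widehat{L}}|/|R| + |E_{\widehat{L}}|/|E|$ is a positive scaling of the cardinality of a union of sets, one singleton relation and one singleton entity contributed by each chosen pair $(r,e)$; this is the prototypical monotone submodular coverage function, since a newly added pair can only cover a relation or entity that is not already covered, so its marginal contribution can only shrink as $\widehat{L}$ grows. The diversity $\mathcal{D}(\widehat{L}) = \sum_{(r,e)\in\widehat{L}} (V_r + V_e)$ assigns each element a fixed, set-independent value $V_r + V_e \ge 0$ (the $V$'s are precomputable ratios of non-negative quantities, as noted in the text), so $\mathcal{D}$ is modular---hence both submodular and monotone---and contributes no difficulty.

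The interesting term is the redundancy. Here I would show that $\mathcal{R}$ is supermodular, so that $-\mathcal{R}$ is submodular. Writing $\mathcal{R}$ as a sum over unordered pairs of the non-negative embedding distances $\Vert h_{r_1}-h_{r_2}\Vert$ and $\Vert h_{e_1}-h_{e_2}\Vert$, the marginal gain of adding a new element $l$ to a set $S$ is the sum of the distances from $l$ to every element already in $S$. As $S$ enlarges to $S' \supseteq S$, this marginal gain only acquires additional non-negative summands, i.e.\ it is non-decreasing in the set---precisely the supermodularity condition. Thus $-\mathcal{R}$ satisfies the diminishing-returns inequality, and the non-negative combination $w_{\mathcal{C}}\mathcal{C} + w_{\mathcal{D}}\mathcal{D} - w_{\mathcal{R}}\mathcal{R}$ is submodular, which is the property the statement sets out to establish.

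The step I expect to be the main obstacle is \emph{monotonicity}, which the $(1-1/e)$ guarantee of \citet{nemhauser1978analysis} requires in addition to submodularity. Coverage and diversity are non-decreasing, but $-\mathcal{R}$ is non-increasing, so $\mathcal{F}$ need not be monotone in general, and in the genuinely non-monotone regime the greedy algorithm only enjoys weaker constant-factor bounds. To close this gap I would argue that, over the trajectory actually followed by Algorithm~\ref{algo:subset_selection} and for a budget $B$ that keeps the pairwise redundancy penalty small relative to the coverage and diversity gains (enforceable through the choice of $w_{\mathcal{R}}$ relative to $w_{\mathcal{C}}, w_{\mathcal{D}}$), every selected marginal gain $\mathcal{G}(l)$ remains non-negative, so that $\mathcal{F}$ behaves as a monotone submodular function on the relevant sublattice and the classical bound applies verbatim. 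Making this regularity precise---rather than merely asserting submodularity---is the part that deserves the most care.
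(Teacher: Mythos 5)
Your verification of submodularity is essentially the paper's own argument: the same term-by-term decomposition, the same observation that the marginal contribution to $|R_{\widehat{L}}|$ and $|E_{\widehat{L}}|$ is $0$ or $1$ and can only shrink as the set grows, the same treatment of $\mathcal{D}$ as (effectively) modular, and the same computation showing that the marginal increase of $\mathcal{R}$ is a sum of non-negative pairwise distances to the elements already selected, hence non-decreasing in the set, so that $-\mathcal{R}$ is submodular. On this portion the two proofs coincide.

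Where you genuinely depart from the paper is in flagging monotonicity, and you are right to do so: the $(1-1/e)$ guarantee of \cite{nemhauser1978analysis} is stated for \emph{normalized, monotone non-decreasing} submodular functions under a cardinality constraint, whereas the paper's proof establishes only submodularity and stops there. Because $\mathcal{F}$ contains the subtracted term $-w_{\mathcal{R}}\mathcal{R}$, it is not monotone in general, and for non-monotone submodular maximization the plain greedy algorithm does not carry the $(1-1/e)$ bound (indeed it can perform poorly; the known constant-factor guarantees in that regime come from different algorithms). So the concern you raise is a real gap --- in the paper's proof as written, not one you have introduced. That said, your proposed repair is not yet a proof: showing that every greedy marginal gain $\mathcal{G}(l)$ stays non-negative along the algorithm's trajectory does not by itself recover the classical bound, because the standard argument uses monotonicity globally (to assert $\mathcal{F}(\mathrm{OPT} \cup \widehat{L}_i) \geq \mathcal{F}(\mathrm{OPT})$), not merely along the greedy path. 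A clean fix would be to exhibit an explicit condition on $w_{\mathcal{R}}$ (e.g., small enough relative to $w_{\mathcal{D}} \min_{(r,e)\in L}(V_r+V_e)$ and the diameter of the embeddings over $L$, scaled by $B$) under which every element's marginal contribution to $\mathcal{F}$ is non-negative with respect to \emph{every} subset of size at most $B$; that makes $\mathcal{F}$ monotone on the feasible region and the theorem follows verbatim. Making that condition explicit is exactly the part that ``deserves the most care,'' as you put it, and it is the step both your proposal and the paper currently leave open.
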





\begin{proof}
 In order to prove the result, in suffices to show that $ \mathcal{F}(\widehat{L}, R_{\widehat{L}}, E_{\widehat{L}})$ in Equation~\eqref{eqn:objective} is submodular~\cite{nemhauser1978analysis}.
 %
 To this end, we show that for $L'' \subseteq L' \subseteq L$ and for $l = (r_l, e_l) \in L \setminus L'$,
  \begin{small}
  \begin{align*}
  \mathcal{F}(L'' \cup l) - \mathcal{F}(L'') \geq  \mathcal{F}(L' \cup l) - \mathcal{F}(L') 
  \end{align*}
  \end{small}
  Since addition preserves submodularity and the weights $w_{\mathcal{C}}, w_{\mathcal{D}},w_{\mathcal{R}}$ are non-negative, it suffices to show that each term in $\mathcal{F}$ is submodular.
  
First, consider the \emph{coverage} term, $\mathcal{C}(\widehat{L}, R_{\widehat{L}}, E_{\widehat{L}})$. In order to prove that it is submodular, we verify:
  \begin{small}
  \begin{align*}
  \frac{\left(|R_{L'' \cup l}| - |R_{L''}|\right)}{|R|} &\geq   \frac{\left(|R_{L' \cup l}| - |R_{L'}|\right)}{|R|} \\
    \frac{\left(|E_{L'' \cup l}| - |E_{L''}|\right)}{|E|} &\geq   \frac{\left(|E_{L' \cup l}| - |E_{L'}|\right)}{|E|}
  \end{align*}
  \end{small}
%
Note that for the numerators of each of the above lines, the difference can be either $+1$ or $0$. Since $L'' \subset L'$, LHS is, by definition, never less than RHS and the inequalities holds.

Next, consider the diversity term, $\mathcal{D}(\widehat{L}, R_{\widehat{L}}, E_{\widehat{L}})$. The above argument directly applies here as well.

Finally, consider the \emph{redundancy} term. In order to show that $-\mathcal{R}(\widehat{L}, R_{\widehat{L}}, E_{\widehat{L}})$ is submodular, note that when taking the difference between $\mathcal{R}(L'' \cup l)$ and $\mathcal{R}(L'') $ the terms that correspond to both entities (or both relations) being in $L''$ cancel out. The same holds for $\mathcal{R}(L' \cup l) - \mathcal{R}(L')$. We thus have:
\ifthenelse{\isundefined{\forarxiv}}{
  \begin{small}
  \begin{align*}
  & \mathcal{R}(L'' \cup l) - \mathcal{R}(L'') = \\
  & \ \ \underset{r_l \in l,r_2 \in {L''}}{\sum}  \Vert h_{r_1}-h_{r_2} \Vert + \!\!\! \underset{e_l \in l,e_2 \in {L''}}{\sum}  \Vert h_{e_1}-h_{e_2} \Vert \\
  &\mathcal{R}(L' \cup l) - \mathcal{R}(L') = \\
  & \ \ \underset{r_l \in l,r_2 \in {L'}}{\sum}  \Vert h_{r_1}-h_{r_2} \Vert + \!\!\! \underset{e_l \in l,e_2 \in {L'}}{\sum}  \Vert h_{e_1}-h_{e_2} \Vert
  \end{align*}
  \end{small}
}{
  \begin{align*}
  \mathcal{R}(L'' \cup l) - \mathcal{R}(L'') & =
  \quad\underset{r_l \in l,r_2 \in {L''}}{\sum}  \Vert h_{r_1}-h_{r_2} \Vert + \underset{e_l \in l,e_2 \in {L''}}{\sum}  \Vert h_{e_1}-h_{e_2} \Vert, \\
  \mathcal{R}(L' \cup l) - \mathcal{R}(L') & =
  \quad  \underset{r_l \in l,r_2 \in {L'}}{\sum}  \Vert h_{r_1}-h_{r_2} \Vert + \underset{e_l \in l,e_2 \in {L'}}{\sum}  \Vert h_{e_1}-h_{e_2} \Vert.
  \end{align*}
}
Since $L'' \subseteq L'$ and norms are non-negative, 
$$\mathcal{R}(L'' \cup l) - \mathcal{R}(L'')  \leq \mathcal{R}(L' \cup l) - \mathcal{R}(L').$$
The reverse inequality holds for the negation of both sides, proving that $-\mathcal{R}(\widehat{L}, R_{\widehat{L}}, E_{\widehat{L}})$ is submodular.

Combining the three items concludes the proof.
 \end{proof}

%


We will complement this theoretical guarantee in the experiments section (cf.~Table~\ref{table:AL-summary}) by empirically comparing the performance of our query proposal and subset selection methods with baselines.


\section{Experiments} \label{sec:experiments}

We begin with a description of the datasets and the general setup, then evaluate the effectiveness of our guided KB completion approach, and end with an evaluation of our active learning method.\footnote{Data and code available from the authors.}

\subsection{Dataset and Setup}

To assess the quality of our guided KB completion method, we consider the only large existing knowledge bases about generics that we are aware of:

\begin{enumerate}

\item A \textbf{Science} tensor containing
facts about various scientific activities, entities (e.g., animals, instruments, body parts), units, locations, occupations, etc.~\cite{dalvi2017domaintargeted}.\footnote{Aristo Tuple KB v0, http://allenai.org/data/aristo-tuple-kb.} This starting tensor has a precision of about $80\%$ and acts as a valuable resource for challenging tasks such as question answering. Our goal is to start with this tensor and infer more scientific facts at a similar or higher level of precision.

\item An \textbf{Animals} sub-tensor of the Science tensor, which focuses on facts about animals and also has a similar starting precision. Again, the goal is to infer more facts about animals.

\end{enumerate}

The mainstream approach for KB completion is to focus on entities that are mentioned sufficiently often. For instance, the commonly used FB15K dataset guarantees that every entity appears at least 100 times. As a milder version of this, we focus on the subset of the starting tensors where every entity appears at least 20 times. The resulting statistics of the tensors we use here are shown in Table~\ref{table:dataset}.

\begin{table}[tb]
\setlength{\tabcolsep}{8pt}
\setlength{\doublerulesep}{\arrayrulewidth}
\centering
\small
\begin{tabular}{cccc}
Dataset & \# Entities & \# Relations & \# Triples \\
\hline \hline \Tstrut
Animals &  224 & 129 & 10,604 \\
Science & 1,255 & 1,513 & 66,643 \\
\hline
\end{tabular}
\caption{Datasets, with a 3/1/1 train/validation/test split.}
\label{table:dataset}
\end{table}

This data, which is the only one we are aware of with generics, does not include $((s,r,t), all)$ style triples. We therefore use the objective function in Eqn.~\eqref{eqn:two-class} rather than the multi-class one in Eqn.~\eqref{eqn:multi-class}. Despite this limitation of the dataset and its superficial similarity to the binary classification task underlying standard (non-generics) KB completion, our results reveal that extending a generics KB is surprisingly difficult for existing methods.

\citet{dalvi2017domaintargeted} use a pipeline consisting of Open IE~\cite{Banko2007OpenIE} extractions, aggregation, and clean up via crowd-sourcing to generate the Science tensor. These facts come with a relevant WordNet~\cite{miller1995wordnet} based taxonomy, entity types (derived from WordNet `synsets'), and relation schema. Our method capitalizes on this additional information\footnote{In order to limit potential error propagation, we
collapse the taxonomy to the top two levels
in our experiments.} to perform high quality knowledge completion.


Our \textbf{evaluation metric} is the accuracy of the top $k$ triples generated by various KB completion methods. We also visualize entire precision-recall curves, where possible. While this metric requires human annotation and is thus more cumbersome than fully-automatic metrics, it is arguably more suitable for evaluating \emph{generative tasks} with a massive output space, such as KB completion. In this setting, evaluation against a relatively small held out test set can be misleading---a method may be highly accurate at generating thousands of valid and useful triples even if it does not necessarily classify specific held out instances accurately. While measures such MAP and MRR have been used in the past to alleviate this, they provide only a partial solution to the inherent difficulty of evaluating generative systems. Annotation-efficient evaluation methods have recently been proposed to address this challenge~\cite{Sabharwal2017HowGA}.

\subsection{Guided KB Completion}
%
We first compare our method (Section~\ref{sec:IKBC}) with existing KB completion techniques on the Animals tensor, and then demonstrate that its effectiveness carries over scalably to the larger Science tensor as well. In what follows, $\mathcal{T}$ denotes the tensor under consideration.

We examine two alternatives for generating negative samples: given a triple $(s,r,t) \in \mathcal{T}$, replace $s$ with (1) any entity $s'$ or (2) an entity $s'$ of the same type as $s$. The resulting perturbed triple $(s',r,t)$ is then treated as a negative sample if it is not present in $\mathcal{T}$. We also considered a weighted combination of (1) and (2), and found random sampling to be the most reliable on our datasets. This complies with the commonly used LCWA assumption not being applicable to these tensors.

\begin{figure}[tb]
\centering
\includegraphics[width=\plotwidth, trim={10 0 10 7ex}, clip]{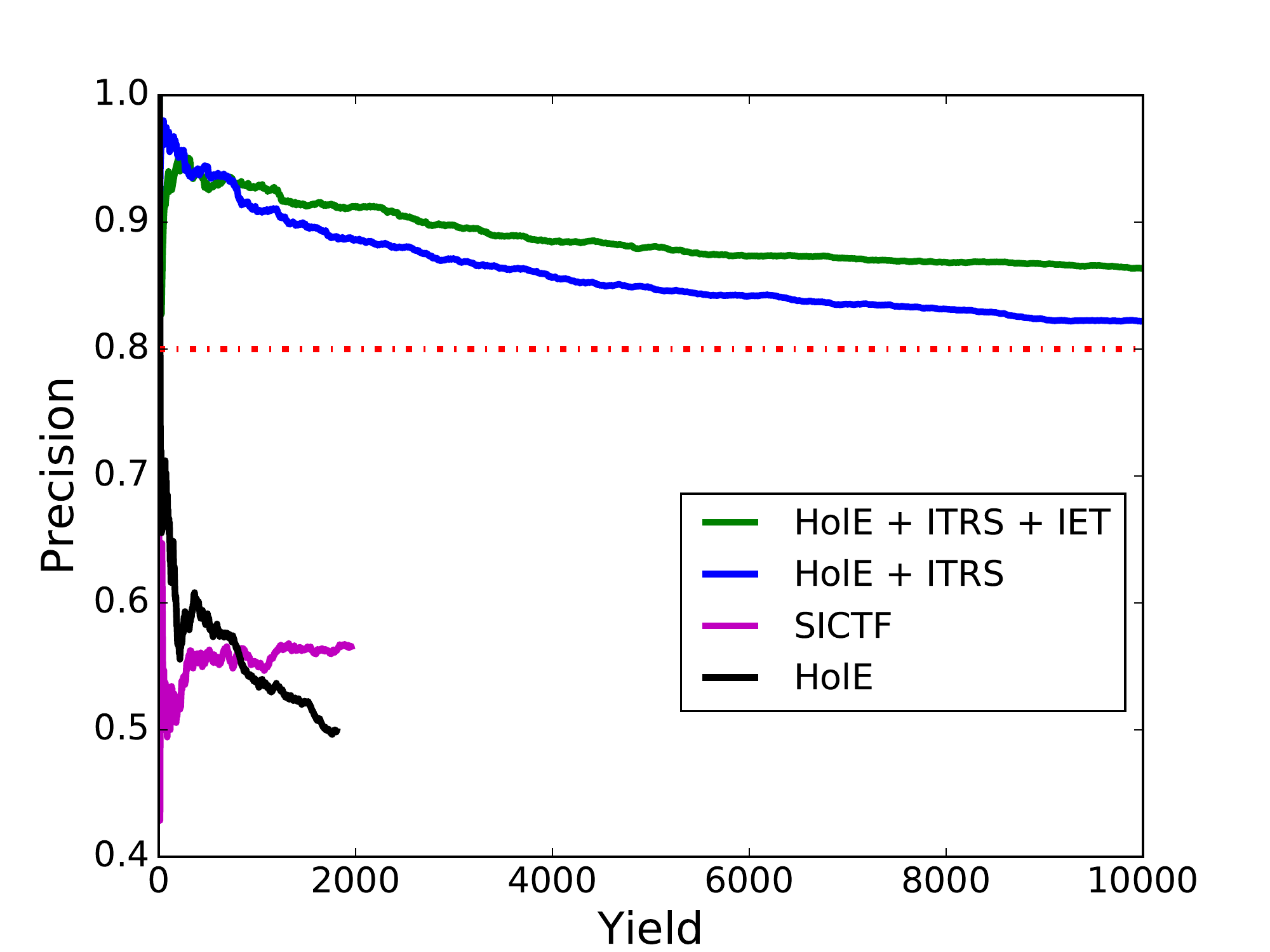} 
\caption{\label{fig:tensor_factorization} Precision-yield curves for various embedding-based methods on the Animals tensor. State-of-the-art named-entity inspired approaches (black, pink) have low precision even at a low yield. TransE is omitted due to its very low precision here, around 10\%. Our method (HolE$+$ITRS$+$IET, green) doubles the size of the starting tensor at a precision of 86.4\%.}
\end{figure}

\begin{table*}[tb]
\setlength{\tabcolsep}{2ex}
\setlength{\doublerulesep}{\arrayrulewidth}
\centering
\small
\newcommand\invalid[1]{\underline{#1}}
\begin{tabular}{lll}
\multicolumn{1}{l}{\textbf{HolE}} & \multicolumn{1}{l}{\textbf{SICTF}} & \multicolumn{1}{l}{\textbf{HolE$+$ITRS$+$IET}} \\
\hline \hline
	\invalid{farm,join,farm}	&	\invalid{penguin,has part,tooth}	&	salmon,thrive in,water	\\
	family,join,family	&	\invalid{mosquito,spread,parasite}	&	animal,give birth to,animal	\\
	\textit{tree,resemble,tree}	&	spider,has part,skin	&	duck,feed in,water	\\
	\textit{water,is known as,water}	&	\invalid{elephant,eat,fish}	&	fish,migrate to,water	\\
	\invalid{virus,attract,virus}	&	shark,has part,skin	&	fish,thrive in,water	\\
	\textit{animal,resemble,animal}	&	crab,eat,insect	&	turtle,swim in,water	\\
	\textit{tree,is known as,tree}	&	snake,eat,fish	&	salmon,swim in,water	\\
	\textit{habitat,is known as,habitat}	&	otter,has part,tooth	&	turtle,live in,water	\\
	\textit{envment.,is known as,envment.}	&	\invalid{meat,attract,hummingbird}	&	animal,chew,food	\\
	\textit{man,join,man}	&	\invalid{spider,has part,claw}	&	insect,destroy,tree	\\
	bird,give birth to,bird	&	\invalid{turtle,has part,tooth}	&	farm,possess,horse	\\
	\textit{region,is known as,region}	&	human,eat,plant	&	fish,swim in,ocean	\\
	\invalid{virus,derive from,virus}	&	\invalid{monkey,has part,wing}	&	turtle,feed in,water	\\
	\textit{food,resemble,food}	&	\invalid{dolphin,has part,tooth}	&	turtle,float in,water	\\
	\textit{bird,is known as,bird}	&	carnivore,live in,water	&	dinosaur,walk on,leg	\\
	\textit{field,resemble,field}	&	lizard,eat,fish	&	turtle,migrate to,water	\\
	\textit{fish,is known as,fish}	&	\invalid{pelican,has part,tooth}	&	turtle,return to,water	\\
	\textit{bird,resemble,bird}	&	\invalid{caterpillar,turn into,bird}	&	\invalid{man,ride,cattle}	\\
	\invalid{grass,graze in,man}	&	bee,pollinate,garden	&	turtle,swim in,ocean	\\
	\textit{animal,is known as,animal}	&	virus,infect,bird	&	fish,float in,ocean	\\
\hline
\end{tabular}
\caption{\label{table:top-predictions} Top 20 predictions by various methods, with invalid triples \underline{underlined} and uninteresting ones , such as (X, is known as, X) or (Y, resembles, Y), shown in \textit{italics}. While some of this assessment can be subjective, it is evident that our method, HolE$+$ITRS$+$IET, generates many more triples that are valid and interesting than competing approaches.
}
\end{table*}

As \textbf{baselines}, we  consider extensions of three state-of-the-art embedding-based KB completion methods: HolE, TransE, and RESCAL. As mentioned earlier, two leading graph-based methods, SFE and PRA, did not scale well. Both vanilla TransE and RESCAL resulted in poor performance; we thus report numbers only for their extensions. Specifically, we consider 3 baselines: (1) HolE, (2) TransE$+$Schema, and (3) SICTF which extends RESCAL and incorporates schema.

\begin{table*}[htb]
\setlength{\tabcolsep}{6pt}
\setlength{\doublerulesep}{\arrayrulewidth}
\centering
\small
\begin{tabular}{|cc|cccc|}
\hline
 & & \multicolumn{4}{c|}{\# New True Triples Inferred} \\ \cline{3-6} \Tstrut
 Query & Subset & From & Sibling & Tensor & \\
 Proposal & Selection & Anntation & Argument & Factorization & Total \\
\hline \hline \Tstrut
Random & - & 0 & - & 0 & 0 \\
Schema Consistent & TK & 73 & - & 10 & 83 \\
Schema Consistent & SM & 57 & - & 27 & 84 \\
Sibling Guided & TK & 96 & 17 & 211 & 324 \\
\textbf{Sibling Guided} & \textbf{SM} & 100 & 17 & 366 & \textbf{483} \\
\hline
\end{tabular}
\caption{Active Learning for new entities: Number of new facts inferred (from annotation, sibling agreement, tensor factorization, and in total) for a representative new entity $\tl{e}$, when querying 100 facts about $\tl{e}$ for human annotation.}
\label{table:AL-summary}
\end{table*}

Figure~\ref{fig:tensor_factorization} shows the resulting precision-yield curves for the predictions made by each method on the \textbf{Animals} dataset containing 10.6K facts. Specifically, for each method, we rank the predictions based on the method's assigned score and compute the precision of the top $k$ predictions for varying $k$. As expected, we observe a generally decreasing trend as $k$ increases.  TransE$+$ITRS gave a precision of only around 10\% and is omitted from the plot. We make two observations:

First, deriving new facts for these generics tensors at a high precision is challenging! Specifically, none of the baseline methods (black and pink curves), which represent state of the art for named-entity tensors, achieve a yield of more than 10\% of $\mathcal{T}$ (i.e., 1K predictions) even at a precision of just 60\%.

Second, external information, if used appropriately, can be surprisingly powerful in this setting. Specifically, simply incorporating relation schema (ITRS, blue curve) allows HolE-based completion to double the size of the starting tensor $\mathcal{T}$ by producing over 10K new triples at a precision of 82\%. Further, incorporating entity taxonomy (IET, green curve) to address tensor sparsity results in the same yield at a statistically significantly higher precision of 86.4\%.

It turns out that not only does our method result in substantially improved PR curves, it also generates \textbf{qualitatively more interesting} and useful generic facts about the world than previous methods. We illustrate this in Table~\ref{table:top-predictions}, which lists the top 20 predictions made by various approaches. The triples shown in red are false predictions (e.g., \emph{(penguin, has part, tooth)}, \emph{(grass, graze in, man)}, \emph{(caterpillar, turn into, bird)}) or uninteresting ones (e.g., \emph{(water, is known as, water)}). As we see, a vast majority of the top 20 predictions made by both vanilla HolE and SICTF fall into these categories. On the other hand, our method, HolE$+$ITRS$+$IET, predicts 19 true tripes out of the top 20, including interesting scientific facts that were evidently missing from the starting tensor, such as \emph{(salmon, thrive in, water)}, \emph{(fish, swim in, ocean)} and \emph{(insect, destroy, tree)}.

Finally, we evaluate our proposal on the entire \textbf{Science} dataset with 66.6K facts. Since graph-based methods did not scale well to the much smaller Animals dataset and other methods performed substantially worse there, we focus here on the scalability and prediction quality of our method. We found that HolE$+$ITRS$+$IET scales well to this high dimension, \emph{doubling the number of facts} by adding 66K new facts at $74\%$ precision.
Although the Science tensor is 1,000 times larger than the Animals tensor, the method took only 10x longer to run (3 minutes on Animals tensor vs.\ 56 minutes on Science tensor, using a 2.8GHz, 16GB Macbook Pro). With additional improvements such as parallelization, it is easily possible to further scale the method up to substantially larger tensors.

\subsection{Active Learning for New Entities}

To assess the quality of our active learning mechanism (Section~\ref{sec:AL}), we consider predicting facts about a new entity $\tl{e}$ that is not in the Animals tensor. For illustration, we choose $\tl{e}$ from
the Science tensor vocabulary while ensuring that it is present in the WordNet taxonomy.

The setup is as follows. We first use a \emph{query generation mechanism} (Random, Schema Consistent, or Sibling Guided; cf. Section~\ref{sec:AL1}) to propose an ordered list $L$ of facts about $\tl{e}$ to annotate. Next, we perform \emph{subset selection} (Top $k$ or TK, Submodular or SM; cf.~Section~\ref{sec:AL2}) on $L$ to identify a subset $\widehat{L}$ of up to $100$ most promising queries. These are then annotated and the true ones fed into tensor factorization as additional input to infer further new facts about $\tl{e}$.

\begin{figure}[tb]
\centering
\includegraphics[width=\plotwidth, trim={0 0 0 0ex}]{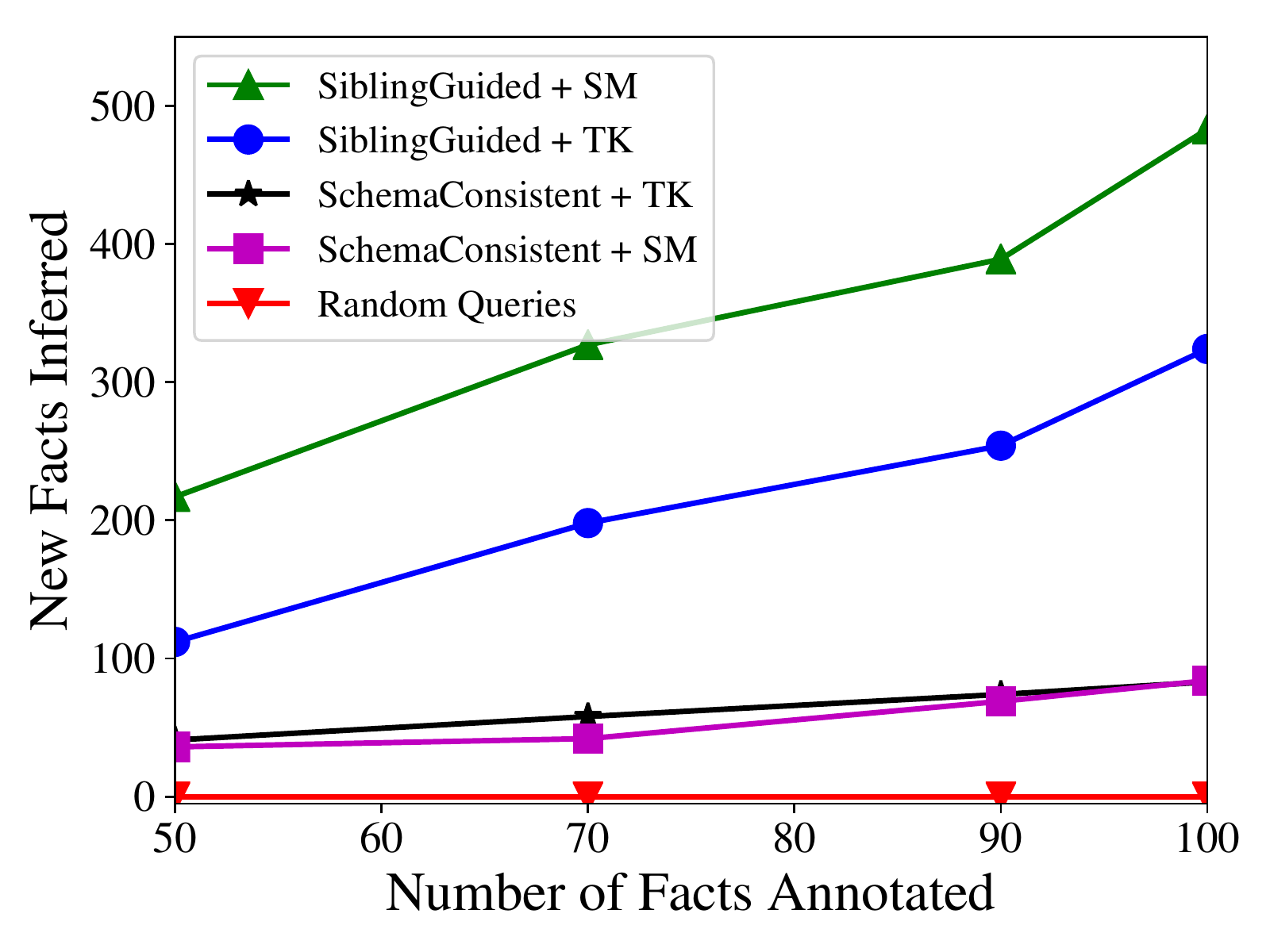}
\caption{\label{fig:subset_selection} Active Learning for new entities: Total number of new inferred facts (y-axis) for various human annotation query sizes (x-axis). The use of subset selection (green triangles, top) and sibling information (blue circles, 2$^\text{nd}$ from top) vastly outperforms various baselines.}
\end{figure}

In Table~\ref{table:AL-summary}, we assess the quality of $\widehat{L}$ in two ways, when $|\widehat{L}| = 100$: how many true facts does $\widehat{L}$ have and how many overall new facts does this annotation produce about $\tl{e}$. Figure~\ref{fig:subset_selection} provides a complementary view, focusing on the overall number of new facts inferred as $|\widehat{L}|$ increases. While these illustrative numbers are for a representative new entity, \textit{reindeer},
the overall trend and order of numbers remained the same for other new entities we experimented with.

We mention some highlights from Table~\ref{table:AL-summary}. First, not surprisingly, randomly choosing triples about $\tl{e}$ to annotate is ineffective. Second, choosing schema consistent triples results in 73 true triples (out of 100) but these facts help tensor factorization very little, resulting in only 10 additional new triples about $\tl{e}$. Our proposed sibling guided querying mechanism results not only in nearly all 100 facts being true along with 17 true facts inferred from sibling agreement (set M in Alg.~\ref{algo:AL}), but also, combined with submodular subset selection for balancing diversity with coverage (Alg.~\ref{algo:subset_selection}), ultimately results in 483 new facts about $\tl{e}$. These facts cover interesting new information such as \emph{(reindeer, eat, fruit)}, \emph{(wolf, chase, reindeer)}, and \emph{(reindeer, provide, fur)}.

Finally, the plot in Figure~\ref{fig:subset_selection} demonstrates that the qualitative trends remain the same, irrespective of the number $|\widehat{L}|$ of queries annotated. Overall, our sibling guided queries with submodular subset selection (green triangles, top-most curve) ultimately results in 5.8 times more new facts about $\tl{e}$ than a non-trivial, uncertainly based, schema consistent baseline (black stars, 3$^\text{rd}$ curve from the top). This attests to the efficacy of the method on this challenging problem and dataset.

\section{Conclusion}

This work explores KB completion for a new class of problems, namely completing generics KBs, which is an essential step for including general world knowledge in intelligent machines. The differences between generics and much studied named entity KBs make existing techniques either not scale well or produce facts at an undesirably low precision out of the box. We demonstrate that incorporating entity taxonomy and relation schema appropriately can be highly effective for generics KBs. Further, to address scarcity of facts about certain entities in such KBs, we present a novel active learning approach using sibling guided uncertainty estimation along with submodular subset selection. The proposed techniques substantially outperform various baselines, setting a new state of the art for this challenging class of completion problems.

Our method is applicable to KBs that have an associated entity taxonomy and relation schema. It is expected to be successful when information from siblings can be used to guide what is likely to be true and what is a good candidate to query for a given entity. We focus on KBs of generics where such information is available and---as we show---is highly valuable for effective KB completion.

Why does our use of types work substantially better in our setting than the use of types in various baselines? One hypothesis is the following. The use of complicated models requires substantial data and information. In our KB, the information appears so sparse and incomplete that using types in complicated ways is not productive. Our proposal instead attempts to use type information only to gently enhance the signal and reduce noise, before performing tensor decomposition. We hope this work will trigger further exploration of knowledge bases with generics, a key aspect of machine intelligence.




\subsection*{Acknowledgments}

\begin{small}
The authors would like to thank Peter Clark for fruitful discussions, valuable feedback, and crowdsourcing annotations; Matt Gardner for constructive comments and assessing graph-based completion methods on our datasets; and Udai Saini and Partha Talukdar for evaluating their CNTF approach on our datasets.
\end{small}

\ifthenelse{\isundefined{\forarxiv}}{
 \bibliographystyle{acl2012}
}{
 \bibliographystyle{plainnat}
}

\begin{small}
\bibliography{bib}

\begin{thebibliography}{35}
\providecommand{\natexlab}[1]{#1}
\providecommand{\url}[1]{\texttt{#1}}
\expandafter\ifx\csname urlstyle\endcsname\relax
  \providecommand{\doi}[1]{doi: #1}\else
  \providecommand{\doi}{doi: \begingroup \urlstyle{rm}\Url}\fi

\bibitem[Banko et~al.(2007)Banko, Cafarella, Soderland, Broadhead, and
  Etzioni]{Banko2007OpenIE}
Michele Banko, Michael~J. Cafarella, Stephen Soderland, Matthew Broadhead, and
  Oren Etzioni.
\newblock Open information extraction from the web.
\newblock In \emph{IJCAI}, 2007.

\bibitem[Bollacker et~al.(2008)Bollacker, Evans, Paritosh, Sturge, and
  Taylor]{bollacker2008freebase}
Kurt Bollacker, Colin Evans, Praveen Paritosh, Tim Sturge, and Jamie Taylor.
\newblock {Freebase}: a collaboratively created graph database for structuring
  human knowledge.
\newblock In \emph{ICMD}, pages 1247--1250. ACM, 2008.

\bibitem[Bordes et~al.(2013)Bordes, Usunier, Garcia-Duran, Weston, and
  Yakhnenko]{bordes2013translating}
Antoine Bordes, Nicolas Usunier, Alberto Garcia-Duran, Jason Weston, and Oksana
  Yakhnenko.
\newblock Translating embeddings for modeling multi-relational data.
\newblock In \emph{NIPS}, pages 2787--2795, 2013.

\bibitem[Chang et~al.(2014)Chang, tau Yih, Yang, and Meek]{Chang2014TypedTD}
Kai-Wei Chang, Wen tau Yih, Bishan Yang, and Christopher Meek.
\newblock Typed tensor decomposition of knowledge bases for relation
  extraction.
\newblock In \emph{EMNLP}, 2014.

\bibitem[Chen et~al.(2014)Chen, Bhojanapalli, Sanghavi, and
  Ward]{chen2014coherent}
Yudong Chen, Srinadh Bhojanapalli, Sujay Sanghavi, and Rachel Ward.
\newblock Coherent matrix completion.
\newblock \emph{ICML}, 2014.

\bibitem[Dalvi et~al.(2017)Dalvi, Tandon, and Clark]{dalvi2017domaintargeted}
Bhavana Dalvi, Niket Tandon, and Peter Clark.
\newblock Domain-targeted, high precision knowledge extraction.
\newblock \emph{TACL}, 2017.
\newblock http://allenai.org/data/aristo-tuple-kb.

\bibitem[Demeester et~al.(2016)Demeester, Rockt{\"a}schel, and
  Riedel]{Demeester2016LiftedRI}
Thomas Demeester, Tim Rockt{\"a}schel, and Sebastian Riedel.
\newblock Lifted rule injection for relation embeddings.
\newblock In \emph{EMNLP}, 2016.

\bibitem[Dong et~al.(2014)Dong, Gabrilovich, Heitz, Horn, Lao, Murphy,
  Strohmann, Sun, and Zhang]{dong2014knowledge}
Xin Dong, Evgeniy Gabrilovich, Geremy Heitz, Wilko Horn, Ni~Lao, Kevin Murphy,
  Thomas Strohmann, Shaohua Sun, and Wei Zhang.
\newblock {Knowledge Vault}: A web-scale approach to probabilistic knowledge
  fusion.
\newblock In \emph{KDD}, pages 601--610. ACM, 2014.

\bibitem[Gardner and Mitchell(2015)]{Gardner2015EfficientAE}
Matthew Gardner and Tom~M. Mitchell.
\newblock Efficient and expressive knowledge base completion using subgraph
  feature extraction.
\newblock In \emph{EMNLP}, 2015.

\bibitem[Hayashi and Shimbo(2017)]{Hayashi2017OnTE}
Katsuhiko Hayashi and Masashi Shimbo.
\newblock On the equivalence of holographic and complex embeddings for link
  prediction.
\newblock In \emph{ACL}, 2017.

\bibitem[Hegde and Talukdar(2015)]{Hegde2015AnEA}
Manjunath Hegde and Partha~P. Talukdar.
\newblock An entity-centric approach for overcoming knowledge graph sparsity.
\newblock In \emph{EMNLP}, 2015.

\bibitem[Komninos and Manandhar(2017)]{komninosfeature}
Alexandros Komninos and Suresh Manandhar.
\newblock Feature-rich networks for knowledge base completion.
\newblock In \emph{ACL}, 2017.

\bibitem[Krishnamurthy and Singh(2013)]{krishnamurthy2013low}
Akshay Krishnamurthy and Aarti Singh.
\newblock Low-rank matrix and tensor completion via adaptive sampling.
\newblock In \emph{NIPS}, pages 836--844, 2013.

\bibitem[Krompa{\ss} et~al.(2014)Krompa{\ss}, Nickel, and
  Tresp]{krompass2014large}
Denis Krompa{\ss}, Maximilian Nickel, and Volker Tresp.
\newblock Large-scale factorization of type-constrained multi-relational data.
\newblock In \emph{2014 International Conference on Data Science and Advanced
  Analytics (DSAA)}, pages 18--24. IEEE, 2014.

\bibitem[Krompa{\ss} et~al.(2015)Krompa{\ss}, Baier, and
  Tresp]{krompass2015type}
Denis Krompa{\ss}, Stephan Baier, and Volker Tresp.
\newblock Type-constrained representation learning in knowledge graphs.
\newblock In \emph{International Semantic Web Conference}, pages 640--655.
  Springer, 2015.

\bibitem[Lao et~al.(2011)Lao, Mitchell, and Cohen]{lao2011random}
Ni~Lao, Tom Mitchell, and William~W Cohen.
\newblock Random walk inference and learning in a large scale knowledge base.
\newblock In \emph{EMNLP}, pages 529--539, 2011.

\bibitem[Leslie(2008)]{Leslie2008GenericsCA}
Sarah-Jane Leslie.
\newblock Generics: Cognition and acquisition.
\newblock \emph{Pholosophical Review}, 117\penalty0 (1):\penalty0 1--47, 2008.

\bibitem[Li et~al.(2016)Li, Taheri, Tu, and Gimpel]{li2016commonsense}
Xiang Li, Aynaz Taheri, Lifu Tu, and Kevin Gimpel.
\newblock Commonsense knowledge base completion.
\newblock In \emph{ACL}, 2016.

\bibitem[Miller(1995)]{miller1995wordnet}
George~A Miller.
\newblock {WordNet}: a lexical database for english.
\newblock \emph{Communications of the ACM}, 38\penalty0 (11):\penalty0 39--41,
  1995.

\bibitem[Narasimhan et~al.(2016)Narasimhan, Yala, and
  Barzilay]{Narasimhan2016ImprovingIE}
Karthik Narasimhan, Adam Yala, and Regina Barzilay.
\newblock Improving information extraction by acquiring external evidence with
  reinforcement learning.
\newblock In \emph{EMNLP}, 2016.

\bibitem[Nemhauser et~al.(1978)Nemhauser, Wolsey, and
  Fisher]{nemhauser1978analysis}
George~L Nemhauser, Laurence~A Wolsey, and Marshall~L Fisher.
\newblock An analysis of approximations for maximizing submodular set functions
  —- {I}.
\newblock \emph{Mathematical Programming}, 14\penalty0 (1):\penalty0 265--294,
  1978.

\bibitem[Nickel et~al.(2011)Nickel, Tresp, and Kriegel]{nickel2011three}
Maximilian Nickel, Volker Tresp, and Hans-Peter Kriegel.
\newblock A three-way model for collective learning on multi-relational data.
\newblock In \emph{ICML}, pages 809--816, 2011.

\bibitem[Nickel et~al.(2016{\natexlab{a}})Nickel, Murphy, Tresp, and
  Gabrilovich]{Nickel2016ARO}
Maximilian Nickel, Kevin Murphy, Volker Tresp, and Evgeniy Gabrilovich.
\newblock A review of relational machine learning for knowledge graphs.
\newblock \emph{Proceedings of the IEEE}, 104\penalty0 (1):\penalty0 11--33,
  2016{\natexlab{a}}.

\bibitem[Nickel et~al.(2016{\natexlab{b}})Nickel, Rosasco, and
  Poggio]{Nickel2016HolographicEO}
Maximilian Nickel, Lorenzo Rosasco, and Tomaso~A. Poggio.
\newblock Holographic embeddings of knowledge graphs.
\newblock \emph{AAAI}, 2016{\natexlab{b}}.

\bibitem[Nimishakavi et~al.(2016)Nimishakavi, Saini, and
  Talukdar]{nimishakavi2016relation}
Madhav Nimishakavi, Uday~Singh Saini, and Partha Talukdar.
\newblock Relation schema induction using tensor factorization with side
  information.
\newblock \emph{EMNLP}, 2016.

\bibitem[Riedel et~al.(2013)Riedel, Yao, McCallum, and
  Marlin]{Riedel2013RelationEW}
Sebastian Riedel, Limin Yao, Andrew McCallum, and Benjamin~M. Marlin.
\newblock Relation extraction with matrix factorization and universal schemas.
\newblock In \emph{HLT-NAACL}, 2013.

\bibitem[Rocktaschel et~al.(2015)Rocktaschel, Singh, and
  Riedel]{Rocktschel2015InjectingLB}
Tim Rocktaschel, Sameer Singh, and Sebastian Riedel.
\newblock Injecting logical background knowledge into embeddings for relation
  extraction.
\newblock In \emph{NAACL}, 2015.

\bibitem[Sabharwal and Sedghi(2017)]{Sabharwal2017HowGA}
Ashish Sabharwal and Hanie Sedghi.
\newblock How good are my predictions? efficiently approximating
  precision-recall curves for massive datasets.
\newblock In \emph{UAI}, 2017.

\bibitem[Sch{\"u}tze et~al.(2017)Sch{\"u}tze, Yaghoobzadeh, and
  Adel]{Schutze2017NoiseMF}
Hinrich Sch{\"u}tze, Yadollah Yaghoobzadeh, and Heike Adel.
\newblock Noise mitigation for neural entity typing and relation extraction.
\newblock In \emph{EACL}, 2017.

\bibitem[Settles(2012)]{Settles2012ActiveL}
Burr Settles.
\newblock \emph{Active Learning}.
\newblock Morgan \& Claypool, 2012.
\newblock ISBN 978-1608457250.

\bibitem[Shi and Weninger(2017)]{Shi2017ProjEEP}
Baoxu Shi and Tim Weninger.
\newblock {ProjE}: Embedding projection for knowledge graph completion.
\newblock In \emph{AAAI}, 2017.

\bibitem[Trouillon et~al.(2016)Trouillon, Welbl, Riedel, Gaussier, and
  Bouchard]{Trouillon2016ComplexEF}
Th{\'e}o Trouillon, Johannes Welbl, Sebastian Riedel, {\'E}ric Gaussier, and
  Guillaume Bouchard.
\newblock Complex embeddings for simple link prediction.
\newblock In \emph{ICML}, 2016.

\bibitem[Xie et~al.(2017)Xie, Ma, Dai, and Hovy]{Xie2017AnIK}
Qizhe Xie, Xuezhe Ma, Zihang Dai, and Eduard~H. Hovy.
\newblock An interpretable knowledge transfer model for knowledge base
  completion.
\newblock In \emph{ACL}, 2017.

\bibitem[Xie et~al.(2016{\natexlab{a}})Xie, Liu, Jia, Luan, and
  Sun]{xie2016representation}
Ruobing Xie, Zhiyuan Liu, Jia Jia, Huanbo Luan, and Maosong Sun.
\newblock Representation learning of knowledge graphs with entity descriptions.
\newblock In \emph{AAAI}, 2016{\natexlab{a}}.

\bibitem[Xie et~al.(2016{\natexlab{b}})Xie, Liu, and Sun]{xierepresentation}
Ruobing Xie, Zhiyuan Liu, and Maosong Sun.
\newblock Representation learning of knowledge graphs with hierarchical types.
\newblock In \emph{IJCAI}, 2016{\natexlab{b}}.

\end{thebibliography}
\end{small}

\end{document}